\documentclass{article}

\usepackage[preprint]{neurips_2026} 
\usepackage{algorithm}
\usepackage{algpseudocode}
\workshoptitle{Geometric Distributional Deep Learning}

\usepackage[utf8]{inputenc} 
\usepackage[T1]{fontenc}    
\usepackage{hyperref}       
\usepackage{url}            
\usepackage{booktabs}       
\usepackage{amsfonts}       
\usepackage{amsmath}        
\usepackage{amssymb}        
\usepackage{amsthm}         
\newtheorem{proposition}{Proposition}
\newtheorem{corollary}{Corollary}

\newtheorem{conjecture}{Conjecture}
\usepackage{nicefrac}       
\usepackage{microtype}      
\usepackage{xcolor}         
\definecolor{accent}{HTML}{0E5A63}
\usepackage{tikz}\usetikzlibrary{arrows.meta,positioning,fit,backgrounds,calc}
\tikzset{
  lat/.style={circle, draw=accent, line width=0.6pt, minimum size=6mm, inner sep=0pt, fill=white},
  obs/.style={circle, draw=accent, line width=0.6pt, minimum size=6mm, inner sep=0pt, fill=accent!18},
  ctrl/.style={rectangle, draw=accent, line width=0.6pt, minimum size=6mm, inner sep=1pt, fill=accent!6},
  ed/.style={-{Stealth[length=2mm]}, line width=0.6pt, draw=accent!85},
    sw/.style  = {draw, circle, inner sep=1.2pt, font=\scriptsize, fill=accent!18, accent},
  res/.style = {draw, circle, inner sep=1.0pt, font=\scriptsize, densely dashed,
                accent!70, fill=accent!4},
  edf/.style = {-{Latex[length=1.4mm]}, accent!55, densely dashed, line width=0.5pt},
}

\newcommand{\KL}{\mathrm{KL}}

\usepackage{todonotes}

\usepackage{enumitem}

\title{The Dually Flat Geometry of Planning as Inference}

\author{
  Nikola Milosevic\\
  Neural Data Science and Statistical Computing Group\\
  Max Planck Institute for Human Cognitive and Brain Sciences\\
  Leipzig, 04103 Germany \\
  \texttt{nmilosevic@cbs.mpg.de}\\
  \And
  Asaki Kataoka\\
  Neural Computation Unit\\
  Okinawa Institute of Science and Technology\\
  Okinawa, 904-0495 Japan\\
  \texttt{asaki.kataoka@oist.jp}\\
  \And
  Nicolás Hinrichs\\
  Neural Data Science and Statistical Computing Group\\
  Max Planck Institute for Human Cognitive and Brain Sciences\\
  Leipzig, 04103 Germany\\
  \texttt{hinrichsn@cbs.mpg.de}\\
  \And
  Kenji Doya\\
  Neural Computation Unit\\
  Okinawa Institute of Science and Technology\\
  Okinawa, 904-0495 Japan\\
  \texttt{doya@oist.jp}\\
  \And
  Nico Scherf\\
  Neural Data Science and Statistical Computing Group\\
  Max Planck Institute for Human Cognitive and Brain Sciences\\
  Leipzig, 04103 Germany\\
  \texttt{nscherf@cbs.mpg.de}
}

\begin{document}

\maketitle

\begin{abstract}
  We present an alternative characterization of the occupancy measure of reinforcement learning, obtained by embedding the planning criterion into the dynamics through a \emph{resetting planning process}. Its stationary measure, which we term \emph{visitation
  measure}, is the object on which the information geometry of decision making is most naturally expressed. The achievable visitation measures form a dually flat statistical manifold whose two affine charts are the visitation probabilities and the log-policies, dual under the conditional entropy. This structure makes planning-as-inference generalize from linear rewards
  to nonlinear functionals of the visitation, each iterate solved by one natural-gradient step,
  and gives the temporal-difference error the interpretation of a marginal-utility estimate. We develop the geometry and its consequences for
  reinforcement learning and theoretical neuroscience.
\end{abstract}

\section{Introduction}
Planning as inference reformulates the maximization of return as conditioning a generative model
on the event of desirable outcomes. The reformulation places control on the same footing as
perception and learning~\cite{friston2010free} and yields principled regularizers for
control~\cite{levine2018reinforcement}. Active inference and the free energy principle extend it,
connecting planning to the minimization of expected free energy~\cite{friston2010free,da2020active}.
Yet while the free energy principle for perception is well developed, it is not settled how to
carry it over to action selection~\cite{millidge2021whence}, and a recurring difficulty is that
the planning objective and the algorithmic tools that solve it efficiently are stated in different languages. The \emph{visitation measure} of the agent's latent planning process (the stationary probability of state--action events under a policy) reconciles the two: it is at once
the variable of variational inference and the object on which dynamic programming acts.

To make this precise we construct a \emph{resetting planning process}, a latent controlled Markov chain that restarts from a fixed law at a state-action-dependent rate. Its i.i.d.\ restart cycles
give unbiased estimates of reinforcement-learning returns, and its normalized stationary measure (the visitation measure) carries the geometry we study. Our contributions are the following.

\begin{itemize}[leftmargin=*,itemsep=2pt]
    \item \textbf{A resetting characterization of the occupancy measure} (\S\ref{sec:setup}): the
  visitation measure is the stationary law of the resetting process, and its Charnes--Cooper
  transform recovers the standard occupancy LP, unifying discounted, finite-horizon,
  average-reward, and general early-termination criteria through the reset rate.
  \item \textbf{The dually flat geometry of the visitation manifold} (\S\ref{sec:md-npg}): the
  visitation measures form a dually flat statistical manifold~\cite{amari2000methods}
  under the conditional entropy, with the visitation probabilities and the log-policies as
  Legendre-dual affine charts. This gives a single geometric account of KL control as covariant
  policy search~\cite{kakade2001natural,bagnell2003covariant,peters2008natural,lan2023policy,neu2017unified,moreno2024efficient},
  in which the natural policy gradient and policy mirror descent are one update written in either
  chart~\cite{raskutti2015information}.
    \item \textbf{Variational planning beyond linear rewards} (\S\ref{sec:variational-planning}):
  dual flatness lets planning-as-inference generalize from linear rewards to nonlinear
  functionals of the visitation (convex-RL and active-inference free energies), where each
  iterate is an exact evidence lower bound solved by one natural-gradient step. The temporal-difference error
  becomes a natural-gradient estimate of the marginal utility of visitation, connecting
  dopaminergic prediction errors to stationary utility theory.
\end{itemize}

\paragraph{Related work.}
Control and planning as inference dates back to Kalman's observation that linear-quadratic control and linear filtering are computationally equivalent \cite{kalman1960new}, later termed \textit{Kalman duality}~\cite{todorov2009efficient, kappen2012optimal}. In robotics and machine learning, probabilistic reformulations of planning \cite{toussaint2009robot} and the maximum-entropy view of MDPs~\cite{ziebart2010modeling, levine2018reinforcement} underlie much of contemporary deep reinforcement learning~\cite{abdolmaleki2018maximum,haarnoja2018soft,ha2018recurrent,hafner2019dream}.

The linear occupancy program approach to Markov decision processes has its roots in the operations research literature of the 1960's~\cite{manne1960linear,d1960probleme,de1960problemes,fox1966markov,derman1970finite}. A recent line
of work beginning with maximum-entropy exploration~\cite{hazan2019provably} replaces this linear objective by a nonlinear functional of the occupancy, unified by~\cite{zhang2020variational}, and popularized by ~\cite{zahavy2021reward, geist2021concave, mutti2022challenging} as \emph{convex MDPs}. A recurring subtlety, made precise by~\cite{mutti2022importance}, is that these objectives generally differ between the finite-trial regime and the population limit. \emph{The resetting planning process} is a normalized version of state-action-dependent discounting~\cite{white2017unifying} which also has an LP formulation~\cite{jasso2024constrained}, and is precisely our transformed linear program \cite{charnes1962programming}. The resetting is the PageRank random surfer, recently connected to policy optimization by \cite{avrachenkov2026linking}. The mechanism itself is discrete-time stochastic resetting \cite{evans2020stochastic}.

The RL analogue of Amari's natural gradient~\cite{amari1998natural,amari2016information} is the natural policy gradient (NPG), introduced by~\cite{kakade2001natural}, justified information-geometrically by~\cite{bagnell2003covariant} and made an actor-critic method by~\cite{peters2008natural}. Contemporary
trust-region and proximal methods \cite{schulman2015trust,schulman2017proximal,abdolmaleki2018maximum}, are often interpreted as approximations, with connections to maximum-entropy methods~\cite{haarnoja2018soft}, made precise by a convex optimization view~\cite{neu2017unified, geist2019theory}. The dual mirror-descent viewpoint is policy mirror descent~\cite{lan2023policy, xiao2022convergence}. We explain the geometric
origin of this duality through a dual flatness perspective using the result of~\cite{raskutti2015information}. We build directly on the work of \cite{muller2024geometry}, who analyze policy gradient flows on normalized discounted occupancy measures, using the same Hessian geometry.

\begin{figure}[t]
\centering

\begin{minipage}[b]{0.50\linewidth}\centering
\begin{tikzpicture}[x=1.65cm]
  \node[lat] (n0) at (0,0) {$\nu_{t\text{-}1}$};
  \node[lat] (n1) at (1,0) {$\nu_{t}$};
  \node[lat] (n2) at (2,0) {$\nu_{t\text{+}1}$};
  \node[lat] (n3) at (3,0) {$\nu_{t\text{+}2}$};
  \node       (nd) at (3.55,0) {$\cdots$};
  \node[res]  (muc) at (1.5,1.15) {$\mu$};
  \foreach \a/\b in {n0/n1,n1/n2,n2/n3} {\draw[ed] (\a) -- (\b);}
  \draw[edf] (muc) to[out=200,in=90] (n1);
  \draw[edf] (muc) to[out=-20,in=90] (n2);
  \draw[edf] (muc) to[out=-15,in=92] (n3);
  \node[font=\scriptsize,text=accent] at (1.5,-0.78)
        {$\nu_{t\text{+}1}=P_\rho^{\!\top}\nu_t=(1{-}\rho)\,P^{\!\top}\nu_t+\rho\,\mu$};
  \node[lat] (fix) at (4.05,0) {$\nu^\pi$};
  \node[font=\scriptsize,text=accent] at (4.05,-0.78) {$K\nu^\pi{=}0$};
\end{tikzpicture}\\[6pt]
{\footnotesize (a) The induced visitation flow $\nu_{t+1}=P_\rho^{\!\top}\nu_t$, a contraction
with fixed point the stationary visitation $\nu^\pi$.}
\end{minipage}\hfill
\begin{minipage}[b]{0.45\linewidth}\centering
\begin{tikzpicture}[scale=0.82,>=Stealth,line join=round]
  \draw[-{Stealth[length=2mm]},line width=0.6pt,draw=accent!85] (0,0) -- (3.7,0);
  \draw[-{Stealth[length=2mm]},line width=0.6pt,draw=accent!85] (0,0) -- (0,3.7);
  \node[font=\scriptsize,text=accent!70,anchor=north east] at (4.7,0.5)
        {$\mathbb R^{\mathcal S\times\mathcal A}$};
  \foreach \a in {16,42,68} {
    \draw[accent!55,line width=0.5pt] (0,0) -- (\a:4.35);
  }
  \draw[accent!55,line width=0.5pt] (10:3.4);
  \draw[line width=0.7pt,draw=accent!85] (68:1.45) -- (16:1.45)
        node[below right=-5pt,font=\scriptsize,text=accent]{$\langle\mathbf 1,\nu\rangle{=}1$};
  \draw[densely dashed,line width=0.7pt,draw=accent!85] (16:2.3) -- (68:3.65)
        node[right=-2pt,font=\scriptsize,text=accent]{$\langle\rho,y\rangle{=}1$};
  \def\ar{42}
  \coordinate (nu) at (\ar:1.30);
  \coordinate (yy) at (\ar:2.55);

  \fill[accent] (nu) circle (1.3pt);
  \fill[accent] (yy) circle (1.3pt);
  \node[above=1pt and 0pt,font=\scriptsize,text=accent] at (nu) {$\nu$};
  \node[above right=1pt and -3pt,font=\scriptsize,text=accent] at (yy) {$y$};
  \fill[accent] (0,0) circle (1.1pt);
  \node[font=\scriptsize,below left=0pt,text=accent] at (0,0) {$0$};
\end{tikzpicture}\\[6pt]
{\footnotesize (b) The flow cone $\{\nu\ge0:K\nu=0\}$ and the two Charnes--Cooper slices $y=\nu/\langle\rho,\nu\rangle$ and $\nu=y/\langle\mathbf 1,y\rangle$.}
\end{minipage}

\caption{The restarting planning process and its visitation geometry. \textbf{(a)}~Marginalising
the reset chain for a fixed policy $\pi$ gives the visitation flow $\nu_{t+1}=P_\rho^{\!\top}\nu_t$, whose fixed point is the stationary measure
$\nu^\pi$ ($K\nu^\pi{=}0$), the object we treat as a point on a manifold and optimize in
\S\ref{sec:md-npg}. \textbf{(b)}~Relation of the standard RL liner program on occupancies $y$ and the stationary linear fractional program on $\nu$. Each ray is a set of scaling-equivalent solutions of 
$K\nu{=}0$.}
\label{fig:resetting}
\end{figure}

\section{The resetting planning processes}\label{sec:setup}

We consider the model-based planning setting illustrated in Figure \ref{fig:resetting} a), where an agent simulates a Markov process to infer policies for action selection. The agent is equipped with a latent controlled Markov process which consists of a finite state space $\mathcal{S}$, a finite action space $\mathcal{A}$, and a \emph{resetting controlled Markov kernel}
\begin{equation}
    P_\rho(s'|s,a) = (1-\rho(s,a))P(s'|s,a) + \rho(s,a)\mu(s'),
\end{equation}
where $\rho:\mathcal{S}\times\mathcal{A}\to[0,1]$ is a \emph{reset probability}, $\mu\in\mathcal{P}(\mathcal{S})$ is the start-state measure and $P$ is the agent's transition model of the environment. For constant $\rho\equiv1-\gamma$, this is the PageRank model of discounted Markov decision processes~\cite{avrachenkov2026linking} with discount factor $\gamma$. For general $\rho$ it is a normalized version of state-action-dependent discounting \cite{white2017unifying}.

Fix a memoryless stochastic policy $\pi:\mathcal{S}\to\mathcal{P}_+(\mathcal{A})$ and
define the state-action chain
$P_\rho^\pi(s',a'\mid s,a) := \pi(a'\mid s')\,P_\rho(s'\mid s,a)$. If the reset process
is unichain~\cite{puterman1994markov}, the $P_\rho^\pi$ admits a unique stationary
visitation measure $\nu^\pi\in\mathcal{P}_+(\mathcal{S}\times\mathcal{A})$ of full support,
satisfying the flow balance
\begin{equation}\label{eq:stationarity}
    \sum_a \nu^\pi(s,a) \;=\; \sum_{s',a'} P_\rho(s\mid s',a')\,\nu^\pi(s',a'),
    \qquad \forall s\in\mathcal{S}.
\end{equation}
In the following, we write the flow residual as the linear operator
\begin{equation}
    (K\nu)(s) \;=\; \sum_a \nu(s,a) \;-\; \sum_{s',a'} P_\rho(s\mid s',a')\,\nu(s',a')
\end{equation}
and assume that $\langle\rho,\nu\rangle>0$ for all $\nu\in\mathcal V_+$. The fractional-linear program
\begin{equation}\label{eq:objective}
    \sup_{\nu\in\mathcal V_+}\ \frac{\langle r,\nu\rangle}{\langle\rho,\nu\rangle},
    \qquad
    \mathcal V_+=\bigl\{\nu>0:\ K\nu=0,\ \langle\mathbf 1,\nu\rangle=1\bigr\},
\end{equation}
reproduces the standard performance criteria of reinforcement learning via its Charnes--Cooper transformation~\cite{charnes1962programming}, which yields the standard linear program of the respective Markov decision
process~\cite{manne1960linear, puterman1994markov}:
\begin{equation}\label{eq:lp}
    \sup_{y\in\mathcal Y_+}\ \langle r,y\rangle,
    \qquad
    \mathcal Y_+:=\bigl\{y>0: K y=0,\ \ \langle\rho,y\rangle=1\bigr\}.
\end{equation}
The restart probability $\rho$ then represents generalized discounting~\cite{white2017unifying} and $y$ is the standard \textit{occupancy measure} of reinforcement learning for the general discount $\gamma(s,a)=1-\rho(s,a)$, see Figure \ref{fig:resetting} b) and Appendix \ref{app:reproduce-rl}.

\paragraph{Motivation for the resetting planning process.} 
Standard discounting attaches a factor $\gamma^t$ to a non-terminating
trajectory. The resetting construction instead embeds the decision criterion into the process, which admits a natural interpretation of the occupancy measure in terms of infinite-horizon planning-as-inference as its stationary measure. With probability $\rho(s,a)$ the agent's imagined life time ends and restarts from $s_0\sim\mu$. The resulting normalization makes the same object interpretable in \textit{geometric} and \textit{inferential} terms: 

\begin{enumerate}[leftmargin=*,itemsep=2pt]
    \item \textbf{Geometrically}, a $\nu\in\mathcal V_+$ is a probability measure on $\mathcal S\times\mathcal A$, so it lives on a statistical manifold and inherits the Fisher--Rao
/ Hessian structure of information geometry. Precisely, the feasible set in Eq. \ref{eq:objective} is the intersection of the open $(\mathcal S\times\mathcal A)$-simplex with the affine
subspace $\ker K$, see Figure~\ref{fig:geom}. Since the $|\mathcal{S}|$ rows of $K$ have
rank $|\mathcal{S}|-1$ and $\mathcal{V}_+$ is relatively open, it is a smooth submanifold of the open
$(\mathcal S\times\mathcal A)$-simplex of dimension $|\mathcal{S}|(|\mathcal{A}|-1)$. We will refer to $\mathcal V_+$ as the \emph{visitation manifold} of the planning process.
     \item \textbf{Inferentially}, the stationary law of the restarting process is a variational posterior where planning
  becomes conditioning a generative model on optimality, with $\nu$ the posterior over
  state--action events and the free energy its evidence bound (\S\ref{sec:planning-inference}). With restarts, a single episode is an i.i.d.\ trajectory sample of a.s. finite length, and $\nu$ is the stationary measure of the process. Generalizing to nonlinear free energy functionals $F(\nu)$ is then meaningful from the point of view of variational inference and the descending directions can be estimated as policy advantages from a finite number of finite-length trajectories~\cite{agarwal2021theory,schulman2015high,mutti2022challenging} without differentiation through the agent's model.
\end{enumerate}
The visitation measure is thus the natural variable in which the information geometry, the control-as-inference aspect, and the advantage estimator all can be jointly analyzed.

\section{Dually flat structure of the visitation manifold}

A point of the \emph{visitation manifold} is a stationary visitation measure $\nu\in\mathcal V_+$. In the ambient linear
coordinates, $\mathcal V_+$ is a relatively open convex subset of $\ker K$, so these restrict to a
global affine chart. The corresponding non-redundant \emph{mixture chart}
$\eta(\nu)\in\mathbb R^{|\mathcal S|(|\mathcal A|-1)}$ is constructed in
Appendix~\ref{app:mixture-chart}. We use $\nu$ and its ambient representation interchangeably for
mixtures, and reserve $\eta$ for statements needing a homeomorphism to
$\mathbb R^{|\mathcal S|(|\mathcal A|-1)}$. The construction is visualized in Figure \ref{fig:geom}.

\begin{figure}[t]
    \centering
    \begin{tikzpicture}[
    >={Stealth[length=2mm]},
    open/.style   ={draw=accent,dashed,line width=1pt,line join=round},
    shadow/.style ={draw=accent!55,dotted,line width=0.9pt,line join=round},
    ambient/.style={draw=accent,dashed,line width=1pt,line join=round},
    ax/.style     ={draw=accent!70,line width=0.8pt,->},
    tang/.style   ={draw=accent,line width=1pt,-{Stealth[length=1.5mm]}},
    map/.style    ={draw=accent,line width=1.1pt,->},
    lbl/.style    ={font=\small,text=accent!85},
    sub/.style    ={font=\scriptsize,text=accent!85},
    capt/.style   ={font=\footnotesize,align=center,text=accent!85},
    perp/.style   ={draw=accent!55,line width=0.9pt,densely dashed},
  ]

  \begin{scope}[shift={(1.55,5.15)},rotate=-18]
    \shade[inner color=accent!14,outer color=accent!24] (-1.05,-1.05) rectangle (1.05,1.05);
    \draw[open] (-1.05,-1.05) rectangle (1.05,1.05);
  \end{scope}

  \begin{scope}[shift={(1.55,5.15)}]
    \draw[tang] (0,0) -- (0.72,0.20);
    \draw[tang] (0,0) -- (-0.10,0.70);
    \fill[accent] (0,0) circle (1.5pt);
    \node[sub,anchor=north] at (0.02,-0.06) {$\nu$};
    \node[sub,anchor=west] at (0.70,0.24) {$\partial_i$};
  \end{scope}
  \node[lbl] at (1.4,3.72) {$\mathcal V_{+}$};

    \shade[inner color=accent!10,outer color=accent!22]
        (9.10,6.25) -- (7.60,4.20) -- (10.70,4.20) -- cycle;
  \draw[ambient] (9.10,6.25) -- (7.60,4.20) -- (10.70,4.20) -- cycle;

  \coordinate (nu) at (8.95,5.05);

  \draw[perp] ($(nu)+(-1.55,-0.586)$) -- ($(nu)+(1.75,0.661)$);
  \node[sub,anchor=west]  at ($(nu)+(1.78,0.672)$) {$\ker K$};

  \draw[perp]  ($(nu)+(0.42,-1.11)$) -- ($(nu)+(-0.60,1.59)$);
  \node[sub,anchor=south] at ($(nu)+(-0.62,1.63)$) {$\operatorname{im}K^{\!\top}$};

  \draw[accent!55,line width=0.6pt]
        ($(nu)+(0.18,0.068)$)
     -- ($(nu)+(0.18,0.068)+(-0.068,0.18)$)
     -- ($(nu)+(-0.068,0.18)$);

  \draw[tang] (nu) -- ($(nu)+(0.45,0.170)$);
  \node[sub,anchor=north west,xshift=0pt,yshift=0pt] at ($(nu)+(0.4,0.2)$) {$\partial_i$};

  \fill[accent] (nu) circle (1.6pt);
  \node[sub,anchor=north east,xshift=1pt,yshift=-1pt] at (nu) {$\nu$};
  \node[sub,anchor=north] at (9.10,4.10) {$\mathcal P_+(\mathcal S\times\mathcal A)$};

  \draw[map] (3.15,5.35) to[bend left=7] (7.4,5.35);
  \node[lbl,anchor=south] at (5.15,5.60) {$\iota$};

  \coordinate (O1) at (0.50,0.70);
  \draw[ax] (O1) -- ++(3.15,0);
  \draw[ax] (O1) -- ++(0,2.25);
  \draw[shadow] (0.50,0.70) -- (3.05,0.70) -- (0.50,2.65) -- cycle;
  \shade[inner color=accent!14,outer color=accent!24]
      (0.80,0.95) -- (2.30,0.95) -- (1.80,1.58) -- (0.80,1.58) -- cycle;
  \draw[open]
      (0.80,0.95) -- (2.30,0.95) -- (1.80,1.58) -- (0.80,1.58) -- cycle;
  \node[capt,anchor=north] at (1.85,0.42)
      {visitation probabilities $\eta(\nu)$\\[-1pt] {\scriptsize mixture (m) chart}};

  \coordinate (O2) at (7.90,0.70);
  \draw[ax] (O2) -- ++(3.20,0);
  \draw[ax] (O2) -- ++(0,2.25);
  \shade[inner color=accent!14,outer color=accent!24] (8.15,0.95) rectangle (10.55,2.40);
  \draw[open] (8.15,0.95) rectangle (10.55,2.40);
  \node[capt,anchor=north] at (9.35,0.42)
      {log-probabilities of the policy $\theta(\nu)$\\[-1pt] {\scriptsize exponential (e) chart}};

  \node[lbl] at (5.25,2.05) {$\mathbb{R}^{\,|\mathcal S|(|\mathcal A|-1)}$};
  \draw[map,{Stealth[length=1.8mm]}-{Stealth[length=1.8mm]}]
      (3.75,1.45) to[bend right=15] (6.75,1.45);
  \node[capt] at (5.35,0.80) {$\log\pi=\nabla_{\!\mathcal V_{+}}\varphi(\nu)$};

  \node[font=\small] at (-0.5,6.5) {a)};
  \node[font=\small] at (-0.5,3.25) {b)};

\end{tikzpicture}
    \caption{The visitation geometry in one picture. a) the manifold via its inclusion $\iota(\mathcal{V}_+)=\mathrm{ker}K\cap\mathcal P_+(\mathcal S \times \mathcal A)$
    in the state-action simplex. b) $\mathcal V_+$ admits two global affine charts, the
    mixture coordinate $\eta$ of the visitation probabilities (left) and the exponential
    coordinate $\theta$ of the log-policies (right).}
    \label{fig:geom}
\end{figure}

\paragraph{Visitations and policies are diffeomorphic.} The maps~\eqref{eq:cc-maps} are mutually inverse real-analytic diffeomorphisms between the mass section $\mathcal V_+=\{\langle\mathbf 1,\nu\rangle=1\}$ and the reset-rate section
$\{\langle\rho,y\rangle=1\}$ of the flow cone $\{z\ge0:Kz=0\}$, well defined since
$\langle\rho,\nu\rangle>0$. 
Every positive measure $\nu\in\mathcal{P}_+(\mathcal{S}\times\mathcal{A})$
satisfying \eqref{eq:stationarity} is the stationary measure of the memoryless policy
recovered by conditioning, $\pi_\nu(a\mid s)=\nu(s,a)/\!\sum_{a'}\nu(s,a')$, and conditioning is scale-invariant, $\pi_y=\pi_\nu$, so
$\nu\mapsto\nu/\langle\rho,\nu\rangle$ fixes the induced policy and moves only the normalization. The conditioning map $y\mapsto\pi_y$ is a diffeomorphism~\cite{muller2024geometry}, hence the same is true for $\nu\mapsto\pi_\nu$. 

\paragraph{Dual affine coordinates.} The negative conditional entropy
$\varphi:\mathcal V_+\to\mathbb R\cup\{+\infty\}$,
\begin{equation}\label{eq:phi}
    \varphi(\nu)=\sum_{s,a}\nu(s,a)\,\log\frac{\nu(s,a)}{\sum_{a'}\nu(s,a')},
\end{equation}
is convex on $\mathcal V_+$ (conditional entropy is concave in the joint law) and generates the
statistical structure. Its differential is the \emph{exponential coordinate}
\begin{equation}\label{eq:dual-coord}
    \theta(\nu):=\nabla\varphi(\nu),\qquad
    \frac{\partial\varphi}{\partial\nu(s,a)}=\log\pi_\nu(a\mid s),
\end{equation}
the log-policy modulo state-wise constants $c_s$. Thus the two affine charts of the same point
$\nu$ are the mixture coordinate $\eta$ (the visitation, m-flat) and the exponential coordinate
$\theta$ (the log-policy, e-flat), Legendre-dual through $\theta=\nabla\varphi(\eta)$ and
$\eta=\nabla\varphi^\ast(\theta)$. Appendix~\ref{app:mixture-chart} derives this pairing on the
explicit charts. Note that the reset rate deforms the m-chart and the Hessian geometry, but not the e-chart. This is the first form of the policy--visitation duality: \emph{a visitation and its log-policy are the mixture and exponential coordinates of one point.}

\paragraph{Tangent space.} At $\nu\in\mathcal V_+$ the \emph{tangent space} is
$T_\nu\mathcal V_+=\ker K\cap\ker\mathbf 1^{\!\top}$, of dimension $|\mathcal S|(|\mathcal A|-1)$.
Every $u\in T_\nu\mathcal V_+$ decomposes uniquely into a marginal and a conditional part,
\begin{equation}\label{eq:tangent-decomp}
    u(s,a)=u_S(s)\,\pi_\nu(a\mid s)+\nu_S(s)\,\dot\pi_u(a\mid s),
    \quad \textstyle\sum_a\dot\pi_u(a\mid s)=0,\ \ u_S(s):=\textstyle\sum_a u(s,a),
\end{equation}
and $u\mapsto\dot\pi_u$ is a linear isomorphism onto the tangent space of the product action
simplex, a \emph{policy variation}. Moreover, the marginal part is fully determined by $\dot\pi_u$ through the constraint $Ku=0$, more precisely its linearized flow $u_S(s)=\sum_{s',a'}P_\rho(s\mid s',a')u(s',a')$. So a
\emph{visitation variation} $u$ (m-side) and its \emph{policy variation} $\dot\pi_u$ (e-side) are two representations of one tangent vector, which is the infinitesimal form of the chart duality above.

\paragraph{Cotangent space.} Dually, the \emph{cotangent space} is
$T^\ast_\nu\mathcal V_+=(\mathbb R^{\mathcal S\times\mathcal A})^\ast/(\operatorname{im}K^{\!\top}+\mathbb R\mathbf 1)$,
with $(K^{\!\top}\lambda)(s,a)=\lambda(s)-\sum_{s'}P_\rho(s'\mid s,a)\lambda(s')$. A reward function $r$ and its shifts $r+K^{\!\top}\lambda+c\mathbf 1$ represent the same cotangent vector,
so covectors are state-action functions modulo (resetting-based) potential shaping ($\operatorname{im}K^{\!\top}$~\cite{ng1999policy}) and constants ($\mathbb R\mathbf 1$), exactly the symmetries of the advantage function~\cite{schulman2015high}. They act as linear forms on tangent vectors on $\mathcal{V}_+$. 

The canonical representative of the cotangent class $[r]_\nu$ is the centered \emph{advantage}
$A_\nu[r]$, obtained by choosing $\lambda=V_\nu[r]$, the value of the resetting chain. The value is defined by the resetting Bellman (Poisson) equation \emph{with
its gain term},
\begin{equation}\label{eq:value}
    V_\nu[r](s)+J(\nu)\,\bar\rho(s)
    =\mathbb E_{\pi_\nu(\cdot\mid s)}\!\Big[r(s,a)+\textstyle\sum_{s'}P_\rho(s'\mid s,a)\,V_\nu[r](s')\Big],
    \quad J(\nu)=\frac{\langle r,\nu\rangle}{\langle\rho,\nu\rangle},
\end{equation}
with $\bar\rho(s)=\mathbb E_{\pi_\nu(\cdot|s)}[\rho(s,a)]$, which determines $V_\nu[r]$ uniquely up to the constant gauge
$\mathbb R\mathbf 1$. The advantage is then the reduced cost
\begin{equation}\label{eq:advantage}
    A_\nu[r]:=r-J(\nu)\,\rho-K^{\!\top}V_\nu[r],
    \qquad \mathbb E_{\pi_\nu(\cdot\mid s)}\big[A_\nu[r](s,\cdot)\big]=0,
\end{equation}
which is invariant to the constant gauge in $V_\nu[r]$, so it is well defined as the centered
representative of $[r]_\nu$ even though $V_\nu[r]$ is not.\footnote{%
For constant $\rho\equiv1-\gamma$, $\bar\rho\equiv1-\gamma$ and \eqref{eq:advantage} reduces to the
ordinary discounted advantage $r+\gamma P V_\nu[r]-\mathbb E_{\pi_\nu}[\cdot]$: the gain term
$J(\nu)\rho=(1-\gamma)J(\nu)\mathbf 1$ is a global constant and cancels in the centering.}
The advantage, read as the linear form $\langle A_\nu[r],\cdot\rangle$ on $T_\nu\mathcal V_+$, is
the cotangent object dual to visitation variations: for $u\in T_\nu\mathcal V_+$ the pairing
$\langle A_\nu[r],u\rangle=\langle[r]_\nu,u\rangle=\mathrm dJ_\nu(u)$ is the directional derivative
of the return, gauge-invariant because $u\in\ker K\cap\ker\mathbf 1^{\!\top}$ annihilates both
$\operatorname{im}K^{\!\top}$ and $\mathbb R\mathbf 1$.

\paragraph{Metric.} The Hessian $g_\nu(u,v)=\nabla^2\varphi(\nu)(u,v)$ is
\begin{equation}\label{eq:metric}
    g_\nu(u,v)=\sum_{s,a}\frac{u(s,a)\,v(s,a)}{\nu(s,a)}-\sum_s\frac{u_S(s)\,v_S(s)}{\nu_S(s)}
    =\sum_s\nu_S(s)\sum_a\frac{\dot\pi_u(a\mid s)\,\dot\pi_v(a\mid s)}{\pi_\nu(a\mid s)},
\end{equation}
the ambient joint Fisher--Rao metric \emph{minus} its state-marginal part, equivalently the visitation-weighted sum of per-state action-simplex Fisher-Rao metrics. This is the state-action Fisher metric of the natural policy gradient~\cite{kakade2001natural}, also called the \textit{Kakade metric}~\cite{muller2024geometry}. It is not
the joint Fisher--Rao of $\nu$, which does not respect the flow constraint. The Kakade metric is positive \emph{semi}definite on the ambient space (degenerate along policy-preserving marginal changes) but
positive definite on $T_\nu\mathcal V_+$, where those directions are excluded by $Ku=0$.

\paragraph{Dual flatness and mirror descent.}\label{sec:md-npg}
The two charts each carry a flat connection. The mixture connection $\nabla^{\mathrm m}$, for which
$\nu$ is affine, has mixture geodesics $t\mapsto(1-t)\nu_0+t\nu_1$. The exponential connection
$\nabla^{\mathrm e}$, for which $\theta=\log\pi_\nu$ is affine, has log-linear geodesics
$t\mapsto\pi_0^{1-t}\pi_1^{t}$ (normalized). They are mutually dual w.r.t.\ $g$, so
$(\mathcal V_+,g,\nabla^{\mathrm m},\nabla^{\mathrm e})$ is a dually flat statistical
manifold~\cite{amari2016information,shima2007geometry}, with canonical divergence the
visitation-weighted conditional KL,
\begin{equation}\label{eq:bregman}
    D_\varphi(\nu\|\nu')=\sum_s\nu_S(s)\,\KL\big(\pi_\nu(\cdot\mid s)\,\|\,\pi_{\nu'}(\cdot\mid s)\big),
\end{equation}
the trust region of on-policy optimization. The m- and e-geodesics satisfy the generalized Pythagorean theorem (Appendix~\ref{app:pythagoras}).

Consequently mirror descent and the natural policy gradient are one update in the two charts.
With $\varphi$ as mirror map, the proximal step
\begin{equation}\label{eq:pmd}
    \nu_{k+1}=\arg\min_{\nu\in\mathcal V_+}\ \langle\nabla_\nu f(\nu_k),\nu\rangle
    +\tfrac1\alpha D_\varphi(\nu\|\nu_k)
\end{equation}
is, by \cite[Thm.~1]{raskutti2015information}, natural-gradient descent in the exponential chart,
and the two coincide for every $\alpha>0$. To first order in $\alpha$ they agree with the primal
natural-gradient step $\nu_k-\alpha\,g(\nu_k)^{-1}\nabla_\nu f(\nu_k)$, and all three coincide with
the common Hessian gradient flow $\dot\nu=-\mathrm{grad}_g f$ as $\alpha\to0$~\cite{alvarez2004hessian}.
Its first-order condition is an additive step in the exponential coordinate. For
$f(\nu)=-\langle r,\nu\rangle$ the increment is the advantage,
\begin{equation}\label{eq:advantage-update}
 \log\pi_{k+1}=\log\pi_k+\alpha\,A_{\nu_{k+1}}[r]\ \ (\mathrm{mod}\ \text{values}),
\end{equation}
so \emph{the advantage is the exponential-coordinate increment}. The update is implicit: the
increment carries the advantage at the \emph{updated} iterate $\nu_{k+1}$, through the value gauge
that centers it. Freezing it to the current iterate, $A_{\nu_k}[r]$, gives the explicit
exponentiated-advantage step $\pi_{k+1}\propto\pi_k\exp(\alpha A_{\nu_k}[r])$, the first-order
approximation whose error is well understood~\cite{lan2023policy,xiao2022convergence}. We proceed
with the idealized implicit update.

\paragraph{Infinite-dimesnsional and Wasserstein geometries.} A full measure-theoretic extension to continuous
$\mathcal S,\mathcal A$ in the sense of \cite{ay2017information} is an open challenge, but exponential
families are known to form finite-dimensional dually flat submanifolds of the space of measures~\cite{amari2000methods}. The analogue here is exponential-family policies in linearly parameterized
MDPs~\cite{jin2020provably}, developed in Appendix~\ref{app:linear-mdp}.
$\mathcal{V}_+$ can also be equipped with an $L^2$-Wasserstein structure in continuous or discrete $\mathcal{S}\times\mathcal A$~\cite{li2018natural} derived from the sample space's own geometry. Unlike our Fisher--Rao/Hessian geometry, which reweights probability in place, Wasserstein geodesics transport visitation mass across states. The two are complementary natural gradients on the same polytope.

\begin{figure}[t]\centering
\centering
\begin{minipage}[b]{0.28\linewidth}\centering
\begin{tikzpicture}

  \node[lat] (x1) at (0,0)   {$s$};
  \node[obs] (y)  at (0,-1.6){$o$};
  \draw[ed] (x1) -- (y);
\end{tikzpicture}\\[28pt]
{\footnotesize (a) Perception: infer $q(x\mid y)$.}
\end{minipage}
\begin{minipage}[b]{0.68\linewidth}\centering
\begin{tikzpicture}

  \node[lat] (xa) at (0,0)   {$s_{t\text{-}1}$};
  \node[lat] (xb) at (2.1,0) {$s_{t}$};
  \node[lat] (xc) at (4.3,0) {$s_{t\text{+}1}$};
  \node[lat] (xd) at (6.5,0) {$s_{t\text{+}2}$};

  \draw[ed] (xa) -- (xb); \draw[ed] (xb) -- (xc); \draw[ed] (xc) -- (xd);

  \node[res] (mu) at (5.3,1.55) {$\mu$};

  \draw[edf] (mu) to[out=-150,in=90]  (xc);
  \draw[edf] (mu) to[out=-30,in=95]  (xd);

  \node[obs] (ya) at (0,-1.6)   {$o_{t\text{-}1}$};
  \node[obs] (yb) at (2.1,-1.6) {$o_{t}$};
  \draw[ed] (xa) -- (ya); \draw[ed] (xb) -- (yb);

  \node[obs] (ua) at (0,1.55)   {$a_{t\text{-}2}$};
  \node[obs] (ub) at (2.1,1.55) {$a_{t\text{-}1}$};
  \draw[ed] (ua) -- (xa); \draw[ed] (ub) -- (xb);

  \node[lat] (uc) at (4.3,1.55) {$a_{t}$};
  \node[lat] (ud) at (6.5,1.55) {$a_{t\text{+}1}$};
  \draw[ed] (uc) -- (xc); \draw[ed] (ud) -- (xd);

  \node[obs] (Oa) at (4.3,-1.6) {$o_{t\text{+}1}$};
  \node[obs] (Ob) at (6.5,-1.6) {$o_{t\text{+}2}$};
  \draw[ed] (xc) -- (Oa); \draw[ed] (xd) -- (Ob);

  \draw[dashed,accent!70,line width=0.7pt] (3.15,2.0) -- (3.15,-2.2);
  \node[accent,font=\footnotesize] at (3.15,-2.3) {now $t$};

  \node[font=\scriptsize,text=accent] at (1.05,-2.5) {$y$ observed $\to$ perception};
  \node[font=\scriptsize,text=accent] at (5.40,-2.5) {$y$ expected $\to$ control};
\end{tikzpicture}\\[6pt]
{\footnotesize (b) Planning as inference on the resetting chain.}
\end{minipage}
\caption{Planning as inference with restarting state visitation. (a) standard inferential perception infers latent state $s$ from observation $o$; (b) the
resetting generative model with restart measure $\mu$. At each decision $(s_t,a_t)$, with prob.\ $1{-}\rho(s_t,a_t)$
the state continues under $P$, with prob.\ $\rho$ it restarts from $\mu$. Planning is concerned with inferring the latent future $a_t$, e.g. as an open-loop sequence.}
\label{fig:pai}
\end{figure}

\section{Variational planning beyond linear rewards}
\label{sec:variational-planning}

Planning as inference casts control as maximizing an evidence lower bound (ELBO) whose reward
term is \emph{linear} in the visitation $\nu\in\mathcal V_+$: variational inference on trajectory
distributions with a structured posterior and exponential likelihood yields the standard soft-RL
formulation, which we construct directly on the stationary measure of the restarting chain
(Appendix~\ref{app:planning-as-inference}). Section~\ref{sec:nonlinear} asks what happens
when the objective is instead a \emph{nonlinear} stationary free energy, inducing a general-utility
MDP~\cite{zhang2020variational,zahavy2021reward}. This arises whenever a state-entropy term enters
the free energy, which happens in maximum-state-entropy exploration~\cite{hazan2019provably,ashlag2026probing},
active inference~\cite{milosevic2026active}, risk-averse
objectives~\cite{mutti2022challenging}, and imitation learning~\cite{ho2016generative}. We show
the dually flat geometry of $\mathcal V_+$ allows extending the soft-RL view to this case, reducing the nonlinear problem to a sequence of linear
entropy-regularized MDPs.

\subsection{The linear stationary free energy}
\label{sec:planning-inference}
Fix a reference policy $\pi_{\mathrm{ref}}\in\Pi_+$ with stationary visitation
$\nu^{\mathrm{ref}}\in\mathcal V_+$. A candidate $\pi$ is the future-action posterior and induces
$\nu\in\mathcal V_+$ (Figure~\ref{fig:pai}), both stationary laws of the same resetting chain.
Given a reward $r$ and preference likelihood $p(o{=}o^\star\mid s,a)\propto\exp(\beta^{-1}r(s,a))$,
$0<\beta<\infty$, define the stationary free energy
\begin{equation}\label{eq:linear-fe}
  \mathcal F(\nu):=\langle r,\nu\rangle-\beta\,D_\varphi(\nu\,\|\,\nu^{\mathrm{ref}}),
  \quad
  D_\varphi(\nu\|\nu^{\mathrm{ref}})=\sum_s\nu_S(s)\,\KL\big(\pi_\nu(\cdot\mid s)\,\|\,\pi_{\mathrm{ref}}(\cdot\mid s)\big),
\end{equation}
whose information cost is the manifold's own canonical divergence (\S\ref{sec:md-npg}). Since
$\nu^{\mathrm{ref}}$ has full support, the convex conjugate of $D_\varphi(\cdot\|\nu^{\mathrm{ref}})$
at $r/\beta$,
\begin{equation}\label{eq:logZ}
  \log Z:=\sup_{\nu\in\mathcal V_+}\big\{\langle r/\beta,\nu\rangle-D_\varphi(\nu\|\nu^{\mathrm{ref}})\big\},
\end{equation}
is finite, and Fenchel--Young gives the evidence lower bound, stated entirely in the visitation
measure,
\begin{equation}\label{eq:direct-elbo}
  \beta^{-1}\mathcal F(\nu)\ \le\ \log Z,
  \quad
  \log Z-\beta^{-1}\mathcal F(\nu)=D_\varphi(\nu\,\|\,\nu^\star),\quad \nu^\star=\arg\max_{\mathcal V_+}\mathcal F .
\end{equation}
The bound tightens as $\nu\to\nu^\star$, with slack the Bregman divergence to the optimum. $\log Z$
is a log-partition function, equal to the path-space log-evidence of the resetting generative model
(Appendix~\ref{app:pi-elbo}).

The maximizer is a soft Bellman fixed point: stationarity of \eqref{eq:logZ} gives the
exponential-coordinate tilt $\log(\pi^\star/\pi_{\mathrm{ref}})=\beta^{-1}A_{\nu^\star}[r]$ (mod
values), with the value the fixed point of the log-sum-exp backup through the resetting kernel
$P_\rho$. For $\rho\equiv1-\gamma$ the reset branch $\rho\langle\mu,V^\star\rangle$ is a global
constant that cancels in the advantage, so \eqref{eq:direct-elbo} reduces to textbook soft value
iteration, $\pi^\star\propto\pi_{\mathrm{ref}}\exp(A_{\nu^\star}[r]/\beta)$: \eqref{eq:linear-fe}
is an entropy-regularized MDP~\cite{geist2019theory} and standard soft-RL
methods~\cite{levine2018reinforcement,haarnoja2018soft,abdolmaleki2018maximum} apply with $r$ as
reward. The dually flat structure recasts this in geometric terms and, next, generalizes it to
nonlinear free energies.

\subsection{Nonlinear free energies and their solution}
\label{sec:nonlinear}

The linear free energy \eqref{eq:linear-fe} is the special case where the information cost is the
manifold's own divergence. More generally, \emph{free-energy planning} maximizes any smooth
concave functional of the visitation,
\begin{equation}\label{eq:general-fe}
  \max_{\nu\in\mathcal V_+}\ -F(\nu),\qquad F\ \text{convex on}\ \mathcal V_+,
\end{equation}
the general-utility / convex-MDP regime~\cite{zhang2020variational,zahavy2021reward}. The objectives
of interest are instances, differing only in $\nabla F$:

\emph{(i) Biased generative models.} Scoring states against a preferred distribution
$\tilde p\in\mathcal P_+(\mathcal S)$ with base measure $m(s,a)\propto\tilde p(s)\pi_{\mathrm{ref}}(a\mid s)$
gives $-F(\nu)=\langle\beta^{-1}r,\nu\rangle-\KL(\nu\|m)$, a biased generative model in the language
of active inference~\cite{da2020active}. Splitting $\KL(\nu\|m)$ into its state and conditional
parts,
\begin{equation}\label{eq:fe-expanded}
  -F(\nu)=\langle r_{\mathrm{lin}},\nu\rangle-\varphi(\nu)+H(\nu_S),
  \qquad r_{\mathrm{lin}}:=\beta^{-1}r+\log\tilde p+\log\pi_{\mathrm{ref}},
\end{equation}
whose only nonlinear term is the state-marginal entropy $H(\nu_S)$, the maximum-state-entropy
exploration objective of~\cite{hazan2019provably}.

\emph{(ii) Risk-sensitive objectives.} A coherent or static risk functional of the sequential
outcome, e.g.\ a CVaR-type measure, is likewise concave in
$\nu$~\cite[Table~1]{mutti2022challenging}. The nonlinearity appears in the preference rather than an
exploration bonus, but the solution below is unchanged.

Since $-F$ is concave, \eqref{eq:general-fe} is a convex MDP and the soft Bellman picture of
\S\ref{sec:planning-inference}, which needed $F$ linear in $\nu$, no longer applies directly. Dual
flatness resolves this by mirror descent, which needs only the linearization of $F$ at the current
iterate. Its ambient Euclidean gradient
$-\nabla F(\nu_k)$ fixes a covector representative of the differential $-\delta F[\nu_k]/\delta\nu$,
and the mirror step 
\begin{equation}\label{eq:convex-md}
    \nu_{k+1}=\arg\max_{\nu\in\mathcal V_+}\ \langle -\nabla F(\nu_k),\,\nu\rangle
    -\tfrac1\alpha D_\varphi(\nu\|\nu_k)
\end{equation}
is the per-iterate regularized MDP with that reward and advantage update
\begin{equation}\label{eq:convex-md-advantage-update}
    \log\pi_{k+1}=\log\pi_k+\alpha\,A_{\nu_{k+1}}\!\big[-\nabla F(\nu_k)\big]\ (\mathrm{mod\ values}).
\end{equation}
The increment is the advantage of the frozen linearized reward, evaluated at the updated iterate. 

Convex free-energy planning is thus variational EM~\cite{aubin2022mirror}, the E-step the
linear regularized MDP \eqref{eq:convex-md} (\cite[MD-MPI]{geist2019theory}) and the M-step
the relinearization $\nabla F(\nu_{k+1})$. The implicit update \eqref{eq:convex-md-advantage-update} is obtained by solving the soft-Bellman fixed point \eqref{eq:soft-bellman-planning} with frozen reward $-\nabla F$, reference $\pi_k$, and temperature $\alpha^{-1}$. Its self-consistent value yields the next-iterate advantage $A_{\nu_{k+1}}$. Note that $\pi_k$ is the inner reference with temperature $\alpha$ and $\pi_\mathrm{ref}$ is the outer reference with temperature $\beta$. For (i), the intrinsic reward is the marginal utility of
a stationary utility~\cite{koopmans1960stationary}, $-\nabla F(\nu)=r_{\mathrm{lin}}-\log\nu(s,a)-1$
(using $-\log\nu_S-\log\pi_\nu=-\log\nu$), a policy-dependent novelty bonus on under-visited
preferred states~\cite{zahavy2021reward}. Assigning an ambient linear form to a differential on
$\mathcal V_+$ is the Fisher-cometric construction of~\cite{nagaoka2024fisher}.\footnote{Their
target is the open simplex, with cotangent quotient $\mathbb R^{\mathcal S\times\mathcal A}/\mathbb R\mathbf 1$. Ours carries the flow gauge as well, $\mathbb R^{\mathcal S\times\mathcal A}/(\operatorname{im}K^\top+\mathbb R\mathbf 1)$.}
The reset structure makes each per-iterate advantage estimable without backpropagation through the
model, with tunable bias--variance~\cite{schulman2015high,agarwal2021theory}.

The policy sequence is the minorize--maximize envelope of $F$, as in deep actor-critics~\cite{schulman2015trust},
except that function approximation breaks dual flatness and the global-convergence guarantee. In the
tabular and linear-MDP cases (Appendix~\ref{app:linear-mdp}) dual flatness supplies it whenever
$(F,\varphi)$ are relatively smooth: for $-\nabla^2F\preceq L_F\,g$ on $T\mathcal V_+$, mirror ascent
with step $1/L_F$ converges at rate $L_F/k$ in $D_\varphi$~\cite{lu2018relatively}, and by
\S\ref{sec:md-npg} so does its NPG form. Relative smoothness must be checked per objective. We
believe it holds broadly, since the generic obstruction is the state entropy, whose $1/\nu_S$
curvature is matched by $g$ (\cite{milosevic2026active}, finite horizon; Appendix~\ref{app:relsmooth},
discounted).

\subsection{Application in theoretical neuroscience}\label{sec:neuro}

\paragraph{The origin of reward prediction error.}
Decision making in the brain has long been understood through reinforcement learning and control as
inference, most directly through the reward prediction error (RPE) encoded by phasic
dopamine~\cite{schultz1997neural,schultz2016dopamine}. A key refinement is that these errors are
computed in units not of delivered reward but of a nonlinear \emph{utility} of reward magnitude,
recovered from risky-choice behaviour~\cite{stauffer2014dopamine,alikaya2018reward} and later
placed within distributional RL~\cite{dabney2020distributional}. For sequential decisions, distributional utility measures and
nonlinear occupancy programs can coincide: a coherent risk functional, e.g.\ a CVaR-type measure
via its risk envelope, is a concave functional of the occupancy measure~\cite[Table~1]{mutti2022challenging},
hence a convex MDP in the sense of \S\ref{sec:variational-planning}.

This gives a free-energy interpretation to the utility RPE through Koopmans' stationary utility
theory~\cite{koopmans1960stationary}. A stationary recursive utility takes the aggregator form
$F=V(r,\delta F)$ relating immediate to prospective utility, but leaves $V$ underdetermined. Our
construction fixes it as the soft Bellman backup, the aggregator induced by the free-energy view. It is time-consistent because it is a stationary function of
state. By this interpretation, the quantity dopamine mediates is the marginal utility of the stationary free energy,
$r_k=-\nabla F(\nu)$, whose reward-magnitude slope is the marginal utility $u'$
recovered behaviourally~\cite{stauffer2014dopamine,alikaya2018reward}. The geometry predicts that the RPE is an estimate of the advantage that belongs to a shaped intrinsic reward, hence state- and
policy-dependent rather than a fixed prediction target. This is a refinement that is testable in principle. Note however that this is
an interpretive correspondence at the level of the signal an optimal risk-sensitive agent would compute, not a claim about neural implementation.

\paragraph{Perception--action duality in the cortex.}
A parallel question is how control is realized in the cortex. Motivated by the shared laminar
architecture of sensory and motor areas and by the classical filtering--control
duality~\cite{kalman1960new,todorov2009efficient}, \cite{doya_canonical_2021} maps inference and
control variables onto (thalamo-)cortical circuitry, leaving the precise correspondence open. The
information geometry suggests a sharper form of the question. A recent predictive-coding scheme
derives cortical dynamics from variational free-energy minimization under an exponential-family
assumption~\cite{kataoka_extended_2026}, a natural sensory-processing counterpart to our
control-side framework. 
Both carry a dually flat manifold, ours pairing the stationary measure
$\nu$ of the planning process with the log-policy under the conditional-negentropy potential $\varphi$, theirs pairing expectation and natural parameters under the log-partition. Whether cortex implements a
sensorimotor duality then becomes the geometric question of what map relates these two dually flat manifolds, which we leave to future work.
\vspace{-2pt}
\section{Conclusion}
In this early-stage work, we demonstrate that the stationary measures (visitation measures) of a resetting planning process form a dually flat statistical manifold,
whose two affine charts are the visitation probabilities and the log-policies, dual under the
conditional entropy. This singles out two optimization algorithms: the natural policy gradient and
policy mirror descent are one update written in the two charts. Planning-as-inference then extends from
linear rewards to nonlinear free energies with each iterate an exact evidence lower bound solved
by one natural-gradient step, and the temporal-difference error acquires an interpretation as a marginal
utility estimate. That the same manifold is simultaneously a variational posterior, a
dynamic-programming object, and a source of unbiased per-cycle return estimates is what lets these
views coincide. Whether neural circuits realize this geometry, and how it extends to continuous
spaces, we leave to future work.

\newpage
\bibliographystyle{plain}
\bibliography{references}

\newpage
\appendix

\section{Details on Planning as Inference}\label{app:planning-as-inference}

\subsection{Restarting visitations reproduce RL}\label{app:reproduce-rl}
The fractional-linear functional
\begin{equation}\label{eq:j-objective}
    J(\pi) \;=\; \frac{\langle r,\nu^\pi\rangle}{\langle \rho,\nu^\pi\rangle}
\end{equation}
reproduces the standard performance criteria of reinforcement learning. To see this, we apply the Charnes--Cooper transformation of the fractional linear program
\begin{equation}
    \sup_{\nu\in\mathcal V_+}\ \frac{\langle r,\nu\rangle}{\langle\rho,\nu\rangle},
    \qquad
    \mathcal V_+=\bigl\{\nu>0:\ K\nu=0,\ \langle\mathbf 1,\nu\rangle=1\bigr\},
\end{equation}
and obtain the standard linear program of Markov decision
processes~\cite{manne1960linear, puterman1994markov}
\begin{equation}
    \sup_{y\in\mathcal Y_+}\ \langle r,y\rangle,
    \qquad
    \mathcal Y_+:=\bigl\{y>0: K y=0,\ \ \langle\rho,y\rangle=1\bigr\},
\end{equation}
where $\rho$ represents generalized discounting~\cite{white2017unifying, jasso2024constrained}. The programs are equivalent in the sense that they share the same optimal value and their
maximizers correspond under the mutually inverse maps
\begin{equation}\label{eq:cc-maps}
    \nu=\frac{y}{\langle\mathbf 1,y\rangle}\ \text{(normalize)},\qquad
    y=\frac{\nu}{\langle\rho,\nu\rangle}\ \text{(unnormalize)}.
\end{equation}
The Charnes-Cooper transform in full would generically add another variable $x$ with additional constraints $\{x:x\geq0,x=\langle\mathbf 1, y\rangle\}$. But because our main constraint $K\nu=0$ happens to be linear without affine part, the second variable does not enter the objective functional and its constraints do not interact with the others, so they are merely a definition and can be dropped.

For constant $\rho\equiv 1-\gamma$ one has $\langle\rho,\nu^\pi\rangle = 1-\gamma$ and
\eqref{eq:j-objective} equals the discounted return
$\mathbb{E}\bigl[\sum_{t\ge 0}\gamma^t r(s_t,a_t)\bigr]$. The clock-augmented choice
$\rho(\tilde s,a)=\mathbf{1}[t=H]$ on states $\tilde s=(s,t)$ with reset to $t=0$
recovers the finite-horizon return. The average-reward (gain) criterion arises in the
vanishing-reset limit $\rho\searrow 0$: as the reset rate vanishes $P_\rho^\pi\to P^\pi$
and $\nu^\pi$ converges to the stationary law of the un-reset chain. The reward
$\langle r,\nu^\pi\rangle$ is the gain, provided $P^\pi$ is unichain so the limit is
unique. Here the criterion is carried by the numerator alone, while the ratio $J$ diverges
and the transformation~\eqref{eq:cc-maps} degenerates as $\langle\rho,\nu\rangle\to 0$.
Finally, general $\rho$ yields state-action-dependent discounting~\cite{white2017unifying},
and thus subsumes recent early-termination objectives from continuous
control~\cite{chane2024cat, milosevic2026stochastic}.

\subsection{Equivalence of stationary and path-space free energies}
\label{app:pi-elbo}
The linear free energy of \S\ref{sec:planning-inference} takes the form reward-minus-divergence
as given. We certify that it is the evidence lower bound of a concrete generative model, and that
$D_\varphi$ is the standard trajectory KL.

A \emph{life} is one renewal cycle of the resetting chain: draw $s_0\sim\mu$. At each step draw
$a_t\sim\pi(\cdot\mid s_t)$, flip a reset coin $R_t\sim\mathrm{Ber}(\rho(s_t,a_t))$, and continue to
$s_{t+1}\sim P(\cdot\mid s_t,a_t)$ if $R_t=0$ or end the life if $R_t=1$. The length $L$ is a.s.\
finite since $\mathbb{E}[L]=\frac{1}{\langle \rho,\nu\rangle}<\infty$ and the process is unichain. Let $p,q$ be the laws of one life under $\pi_{\mathrm{ref}}$
and $\pi$, with per-step preference likelihood $p(O_t{=}1\mid s_t,a_t)\propto\exp(\beta^{-1}r(s_t,a_t))$.
The expected per-life visitation is the (unnormalized) occupancy
$y^\pi(s,a)=\mathbb E_q[\sum_{t<L}\delta_{(s_t,a_t)}(s,a)]=\nu^\pi(s,a)/\langle\rho,\nu^\pi\rangle$~\cite{puterman1994markov,altman1999constrained},
with mean life-length $L^\pi:=\langle\mathbf 1,y^\pi\rangle=1/\langle\rho,\nu^\pi\rangle$.

\paragraph{The two ELBOs.} The path-space evidence is
$Z_{\mathrm{path}}=\mathbb E_p[\exp(\beta^{-1}\sum_{t<L}r)]$, and Jensen against $q$ gives
$\log Z_{\mathrm{path}}\ge\mathbb E_q[\beta^{-1}\sum_{t<L}r]-\KL(q\|p)$. Two identities collapse the
right-hand side onto the stepwise free energy. First, the expected per-life return is the resetting
objective,
\[
  \mathbb E_q\big[\textstyle\sum_{t<L}r\big]=\langle r,y^\pi\rangle
  =\frac{\langle r,\nu^\pi\rangle}{\langle\rho,\nu^\pi\rangle}=J(\pi).
\]
Second, the trajectory KL is the canonical divergence: in $\log(q/p)$ the shared restart density
$\mu(s_0)$, reset coins $\rho$, and transitions $P$ cancel, leaving
$\sum_{t<L}\log[\pi(a_t\mid s_t)/\pi_{\mathrm{ref}}(a_t\mid s_t)]$. Taking $\mathbb E_q$ and using
$y^\pi(s,a)=y^\pi_S(s)\pi(a\mid s)$,
\[
  \KL(q\|p)=\big\langle\KL(\pi\|\pi_{\mathrm{ref}}),y^\pi\big\rangle
  =L^\pi\,D_\varphi(\nu^\pi\|\nu^{\mathrm{ref}}).
\]
Both terms carry the common factor $L^\pi$ (the return is $L^\pi\langle r,\nu^\pi\rangle_{\!*}$ per
life, the KL is $L^\pi D_\varphi$), so the path-space ELBO is exactly the stepwise free energy
scaled by the mean life-length,
\begin{equation}\label{eq:path-vs-step}
  \log Z_{\mathrm{path}}\ \ge\ L^\pi\,\beta^{-1}\mathcal F(\nu^\pi),
  \qquad
  \beta^{-1}\mathcal F(\nu^\pi)=\beta^{-1}\langle r,\nu^\pi\rangle-D_\varphi(\nu^\pi\|\nu^{\mathrm{ref}}).
\end{equation}
At the maximizer $L^\pi$ is a fixed positive scale, so $\log Z_{\mathrm{path}}$ and the stepwise
$\log Z$ of \eqref{eq:logZ} share the same optimal policy; the stepwise bound
\eqref{eq:direct-elbo} is \eqref{eq:path-vs-step} per unit life-length.

\paragraph{The log-partition explicitly.} The first-order conditions of \eqref{eq:logZ} identify
$\log Z$ with the multiplier $c$ of the mass constraint $\langle\mathbf 1,\nu\rangle=1$. Stationarity
gives the tilt $\log(\pi^\star/\pi_{\mathrm{ref}})=\beta^{-1}(r-K^\top V^\star)-c\mathbf 1
=\beta^{-1}A_{\nu^\star}[r]-c\mathbf 1$, where $V^\star$ (the multiplier of $K\nu=0$, in reward
units) is the soft \emph{bias} and $c$ the soft \emph{gain}; per-state normalization
$\sum_a\pi^\star(a\mid s)=1$ turns this into the gain-corrected soft Bellman equation
\begin{equation}\label{eq:soft-bellman-planning}
  V^\star(s)+\beta c
  =\beta\log\sum_a\pi_{\mathrm{ref}}(a\mid s)\exp\!\Big(\tfrac1\beta\big[r(s,a)
  +(1-\rho(s,a))\textstyle\sum_{s'}P(s'\mid s,a)V^\star(s')+\rho(s,a)\langle\mu,V^\star\rangle\big]\Big),
\end{equation}
solvable because the gain $\beta c$ absorbs the constant null direction of the resetting kernel
(cf.\ the gain-corrected Poisson equation of the cotangent space construction in the main text). It fixes $V^\star$ up to the
constant gauge, to which $A_{\nu^\star}[r]=r-K^\top V^\star$ is invariant. Substituting the tilt
back into \eqref{eq:logZ} and using $K\nu^\star=0$, $\langle\mathbf 1,\nu^\star\rangle=1$, the two
reward terms cancel and
\begin{equation}\label{eq:logZ-explicit}
  \log Z=c=\langle\rho,\nu^\star\rangle\,\log Z_{\mathrm{path}}
        =\frac{\log Z_{\mathrm{path}}}{L^{\pi^\star}},
\end{equation}
the soft \emph{gain}: the per-life path log-partition $\log Z_{\mathrm{path}}$ divided by the mean
life-length $L^{\pi^\star}=1/\langle\rho,\nu^\star\rangle$, i.e.\ the free-energy rate of the
renewal process. For constant $\rho\equiv1-\gamma$, $L^{\pi^\star}=1/(1-\gamma)$ and
$c=(1-\gamma)\log Z_{\mathrm{path}}=\beta^{-1}(1-\gamma)\langle\mu,V^\star\rangle$. For constant $\rho\equiv1-\gamma$ the reset branch
$\rho\langle\mu,V^\star\rangle$ is a global constant, \eqref{eq:soft-bellman-planning} reduces to
the ordinary soft Bellman equation, and the gain is the entropy-regularized start value
$c=\beta^{-1}(1-\gamma)\langle\mu,V^\star\rangle$. For state-action-dependent $\rho$ the reset term
couples the backup through the scalar $\langle\mu,V^\star\rangle$.

\section{Details on the Visitation Manifold}

\subsection{Concrete dual charts}
\label{app:mixture-chart}

First, we give the explicit mixture coordinate $\eta$. Because the reset kernel
$P_\rho$ does not depend on the policy, the flow operator $K$ is a fixed linear map, and the
mixture chart is barycentric coordinates on a fixed affine space.

\paragraph{The affine hull.} The point set $\mathcal V_+$ is a relatively open convex subset
of the affine space
\[
  \mathcal H:=\Big\{\nu\in\mathbb R^{\mathcal S\times\mathcal A}:\ K\nu=0,\ \langle\mathbf 1,\nu\rangle=1\Big\},
\]
the intersection of the fixed flow subspace $\ker K$ with the sum-1-hyperplane. Since
$\operatorname{rank}K=|\mathcal S|-1$ (its rows sum to zero) and $\mathbf 1$ is independent of
those rows on $\ker K$, $\mathcal H$ has dimension $d:=|\mathcal S|(|\mathcal A|-1)$, and its
underlying linear space is the tangent space $T=\ker K\cap\ker\mathbf 1^{\!\top}$ of every point
$\nu\in\mathcal V_+$. Note that $\ker\mathbf 1^{\!\top}$ on its own is the annihilator of constant functions and the standard tangent space of the open probability simplex \cite{ay2017information,nagaoka2024fisher}.

\paragraph{Mixture coordinates.} Let $d=|\mathcal S| \times (|\mathcal A|-1)$ and fix $d+1$ affinely independent reference
visitation measures $\nu_0,\dots,\nu_d\in\mathcal H$. Every $\nu\in\mathcal H$
has unique barycentric weights $\lambda_i(\nu)$ with
\[
  \nu=\sum_{i=0}^d\lambda_i(\nu)\,\nu_i,\qquad \sum_{i=0}^d\lambda_i(\nu)=1,
\]
and the \emph{mixture chart} is the reduced weight vector
\begin{equation}\label{eq:mixture-chart}
  \eta(\nu):=\bigl(\lambda_1(\nu),\dots,\lambda_d(\nu)\bigr)\in\mathbb R^d,
\end{equation}
with $\lambda_0=1-\sum_{i\ge1}\lambda_i$ and
\begin{equation}
    \eta^{-1}(x)=\nu_0+\sum_{i=1}^d x_i\,(\nu_i-\nu_0)
\end{equation}
its inverse. These maps give a chart for the \textit{mixture family} of visitation measures and restricted to the point set, $\eta$ is a homeomorphism of $\mathcal V_+$ onto the bounded open convex polytope
$\eta(\mathcal V_+)\subset\mathbb R^d$.

A mixture geodesic is a measure segment
$t\mapsto(1-t)\nu_0'+t\nu_1'$. In the coordinate \eqref{eq:mixture-chart}, it is the affine line
$t\mapsto(1-t)\eta(\nu_0')+t\eta(\nu_1')$, since $\eta$ is affine. Hence its unique affine connection $\nabla^{\mathrm m}$ is
flat in $\eta$ and its Christoffel symbols vanish. $\eta$ is a global $\nabla^{\mathrm m}$-affine
chart, as claimed in the main text. A convenient choice of the reference
measures is $d+1$ affinely independent deterministic-policy occupancies (vertices of the closure of
$\overline{\mathcal V_+}$), which makes the corners interpretable and $\eta^{-1}$ a single
matrix--vector product, the projection illustrated in Figure \ref{fig:geom} on the bottom left.

\paragraph{Exponential coordinates.}
The dual coordinate is the differential of $\varphi$ read in the mixture chart. Since
$\eta\mapsto\nu(\eta)$ is affine \eqref{eq:mixture-chart} with constant Jacobian
$\partial\nu/\partial\eta_i=\nu_i-\nu_0$, the chain rule gives, for the $i$-th dual coordinate,
\begin{equation}\label{eq:dual-chart}
  \theta_i(\nu):=\frac{\partial\varphi}{\partial\eta_i}
  =\Big\langle\frac{\partial\varphi}{\partial\nu},\,\nu_i-\nu_0\Big\rangle
  =\big\langle \log\pi_\nu,\ \nu_i-\nu_0\big\rangle,
\end{equation}
where the last equality uses $\partial\varphi/\partial\nu(s,a)=\log\pi_\nu(a\mid s)$ modulo
state-wise constants: the ambient gradient of the conditional negentropy is the log-policy, and
the state-wise constants annihilate the differences $\nu_i-\nu_0$ (both are flows of unit mass,
so $(\nu_i-\nu_0)$ has zero state marginals and pairs to zero against any state function
$c_s$). Thus $\theta=\nabla\varphi(\eta)$ is a linear image of the log-policy, and $\varphi$
being strictly convex on $\mathcal V_+$ makes $\eta\mapsto\theta$ a diffeomorphism onto its
image; its inverse is $\eta=\nabla\varphi^\ast(\theta)$. In the intrinsic (basis-free) form this
reads $\theta(\nu)=\log\pi_\nu$ modulo state-wise constants---the exponential coordinate is the
log-policy, and the reference-independent representative is fixed by choosing the gauge
$\log\pi_\nu(a_s^\circ\mid s)=0$, giving
$\theta_{s,a}=\log\pi_\nu(a\mid s)-\log\pi_\nu(a_s^\circ\mid s)$.

The two charts therefore have sharply different global shape. The mixture image
$\eta(\mathcal V_+)$ is a \emph{bounded} polytope (an affine image of the bounded set
$\mathcal V_+$), whereas the exponential image
\[
  \theta(\mathcal V_+)=\bigl\{(\log\pi_\nu(a\mid s))_{a\ne a_s^\circ}\bigr\}=\mathbb R^d
\]
is \emph{all} of $\mathbb R^d$: log-probabilities of an open simplex, modulo state-wise
constants, range over the whole space. The Legendre map $\nabla\varphi$ is the homeomorphism
between the bounded m-flat polytope and the complete e-flat space---the finite-dimensional
shadow of a bounded mixture family paired with a complete exponential family. Its geodesics, the
$\theta$-affine lines, are the log-linear policy increments, and $\nabla^{\mathrm e}$ is flat in
$\theta$ by the same argument that made $\nabla^{\mathrm m}$ flat in $\eta$.

\subsection{The generalized Pythagorean theorem certifies dual flatness}
\label{app:pythagoras}

The dual flatness of $\mathcal V_+$ is equivalent to an exact Pythagorean relation for the
Bregman divergence of $\varphi$, and this relation is a directly checkable certificate that the
manifold carries the claimed structure. Throughout, $D_\varphi$ is the canonical divergence of
$\varphi$ and $\theta(\nu)=\nabla\varphi(\nu)=\log\pi_\nu$ the exponential coordinate.

The starting point is an exact three-point identity that holds for \emph{any} convex potential.
Expanding each divergence through $D_\varphi(\nu\|\nu')=\varphi(\nu)-\varphi(\nu')-\langle\theta(\nu'),\nu-\nu'\rangle$
and cancelling the $\varphi(\nu_1),\varphi(\nu_2)$ terms gives, for all
$\nu_1,\nu_2,\nu_3\in\mathcal V_+$,
\begin{equation}\label{eq:three-point}
  D_\varphi(\nu_1\|\nu_2)-D_\varphi(\nu_1\|\nu_3)-D_\varphi(\nu_3\|\nu_2)
  = -\big\langle\,\theta(\nu_2)-\theta(\nu_3),\ \eta(\nu_1)-\eta(\nu_3)\,\big\rangle .
\end{equation}
The content of the correspondence is that the right-hand pairing is the metric inner product of
the two geodesics meeting at $\nu_3$, each read as the chord in its own affine chart. The
mixture geodesic $\nu_3\!\to\!\nu_1$ is $\nu$-affine, so its velocity is the ambient chord
$u_{\mathrm m}=\nu_1-\nu_3\in T_{\nu_3}\mathcal V_+$; the exponential geodesic $\nu_3\!\to\!\nu_2$
is $\theta$-affine, so its velocity $u_{\mathrm e}$ has constant $\theta$-chord
$\theta(\nu_2)-\theta(\nu_3)=\log\pi_{\nu_2}-\log\pi_{\nu_3}$. Since $\theta=\nabla\varphi$ and
$g=\nabla^2\varphi$, the Legendre differential lowers $u_{\mathrm e}$ to this chord,
$\theta(\nu_2)-\theta(\nu_3)=g_{\nu_3}u_{\mathrm e}$. The Euclidean pairing of this chord with
$u_{\mathrm m}$ is then the metric pairing, because the marginal term of $g$ drops against a
mass-zero tangent vector:
\begin{equation}\label{eq:residual-metric}
  \big\langle \log\pi_{\nu_2}-\log\pi_{\nu_3},\,u_{\mathrm m}\big\rangle
  =\sum_{s,a}\frac{u_{\mathrm e}(s,a)\,u_{\mathrm m}(s,a)}{\nu_3(s,a)}
   -\underbrace{\sum_s\frac{u_{\mathrm e,S}(s)\,u_{\mathrm m,S}(s)}{\nu_{3,S}(s)}}_{\text{drops: }\log\pi\text{ is state-centered, }u_{\mathrm m}\in\ker\mathbf 1^{\!\top}}
  =g_{\nu_3}(u_{\mathrm e},u_{\mathrm m}),
\end{equation}
where the first equality uses that $\log\pi_{\nu_2}-\log\pi_{\nu_3}$ is the state-wise-centered
representative of $u_{\mathrm e}/\nu_3$ (log-conditionals carry no marginal component). Hence the
Pythagorean deficit \eqref{eq:three-point} equals $-g_{\nu_3}(u_{\mathrm e},u_{\mathrm m})$,
vanishing exactly at $g$-orthogonality.

\subsection{Linear MDPs and log-linear policies}
\label{app:linear-mdp}

We generalize the dually flat structure to the linear MDP of
Jin et al.~\cite{jin2020provably}, where dynamics and reward are linear in a known feature map
$\phi:\mathcal S\times\mathcal A\to\mathbb R^d$,
\[
  P(s'\mid s,a)=\big\langle\phi(s,a),\,\mathsf m(s')\big\rangle,\qquad
  r(s,a)=\big\langle\phi(s,a),\,\theta_r\big\rangle,
\]
with $\mathsf m=(\mathsf m^{(1)},\dots,\mathsf m^{(d)})$ signed measures on $\mathcal S$
constrained so that $\langle\phi(s,a),\mathsf m\rangle\in\mathcal P(\mathcal S)$ for all $(s,a)$
(i.e.\ $\Phi\mathsf m$ is row-stochastic), and $\theta_r\in\mathbb R^d$.

\paragraph{Both sides collapse to $d$ dimensions.} Writing $\Phi\in\mathbb R^{\mathcal S\mathcal A\times d}$
for the feature matrix and $\psi^\pi:=\Phi^\top\nu^\pi=\mathbb E_{\nu^\pi}[\phi]$ for the
feature-expectation vector, the flow balance of \S\ref{sec:setup} makes the state marginal
affine in $\psi^\pi$,
\begin{equation}\label{eq:linmdp-marginal}
  \nu^\pi_S(s')=(1-\rho)\,\mu(s')+\gamma\,\big\langle\psi^\pi,\,\mathsf m(s')\big\rangle .
\end{equation}
The reward- and dynamics-relevant part of the occupancy is thus the $d$-dimensional linear image
$\mathcal F:=\Phi^\top\mathcal V_+$, a polytope on which the return is linear,
$J(\pi)=\langle\psi^\pi,\theta_r\rangle$, with deterministic-policy vertices. This is the
feature-expectation polytope of apprenticeship learning/feature-matching~\cite{abbeel2004apprenticeship}. Dually,
the conjugate coordinate $\theta=\log\pi_\nu$ restricts to the log-linear class
$\Pi_\phi=\{\pi_w(a\mid s)\propto\exp\langle w,\phi(s,a)\rangle\}$, and $\phi$ is \emph{forced} as
the sufficient statistic: for $\pi_w$ to be Legendre-dual to the mixture family with mean
coordinate $\psi=\mathbb E_\nu[\phi]$, the natural parameter $w$ must pair with $\psi$ through
$\phi$. The metric then pulls back to the $d\times d$ compatible-features Fisher matrix
$G(w)=\mathbb E_{\nu^{\pi_w}}[\operatorname{Cov}_{\pi_w}\phi]$~\cite{kakade2001natural,agarwal2021theory}.
The choice of \emph{conditional} negentropy is what permits this, because it is degenerate along the state-marginal directions.

\paragraph{Exact finite-dimensional PMD.} In a linear MDP $Q^\pi=\langle\phi,\omega^\pi\rangle$
for every policy~\cite{jin2020provably}, so the advantage $A^\pi\in\operatorname{span}\Phi$ and
the exponentiated-advantage step collapses to a $d$-dimensional recursion
\begin{equation}\label{eq:linmdp-recursion}
  w_{k+1}=w_k+\alpha\,\omega^{\pi_k},\qquad G(w_k)^{-1}\nabla_w J(\pi_{w_k})=\omega^{\pi_k},
\end{equation}
which is simultaneously the mirror step \eqref{eq:pmd}, Kakade's natural policy gradient, and
compatible function approximation~\cite{agarwal2021theory} with \emph{zero} approximation error
(the compatible regression is exact because $A^{\pi_k}\in\operatorname{span}\Phi$). More
generally, an exponential-family class $\Pi_T=\{\pi_w\propto\exp\langle w,T\rangle\}$ is closed
under the step \emph{iff} $T$ is \emph{value-complete}, $A^\pi\in\operatorname{span}T$ for all
$\pi\in\Pi_T$. The linear MDP with $T=\phi$ is the canonical case. When value-completeness fails,
\eqref{eq:linmdp-recursion} runs with $\omega^{\pi_k}$ the $T$-projection of $A^{\pi_k}$ and incurs
the compatible-approximation error $\|A^{\pi_k}-\Pi_T A^{\pi_k}\|$, see \cite{agarwal2021theory}.

\paragraph{Towards continuous spaces.} The finite-dimensional collapse above suggests the dual
flatness persists on genuinely continuous $\mathcal S,\mathcal A$ for exponential-family
policies, though a full treatment is beyond our scope. We record the expectation as a conjecture.

\begin{conjecture}\label{conj:continuous}
Let $\mathcal S,\mathcal A$ be standard Borel and $\Pi_T=\{\pi_w\propto\exp\langle w,T\rangle\}$ an
exponential-family policy class with sufficient statistic $T\in L^2$ and full-support restart law
$\mu$. Under the regularity conditions of parametrized measure models~\cite{ay2017information}
($k$-integrability of $T$, $\sigma$-finiteness of the occupancy) and with the conditional
negentropy taken as Bregman potential in the directional-derivative sense of measure-space mirror
descent~\cite{aubin2022mirror}, the resetting visitation measures
$\{\nu^{\pi_w}:w\in\mathbb R^{d'}\}$ form a finite-dimensional dually flat submanifold of
$\mathcal P(\mathcal S\times\mathcal A)$, with $w\mapsto\log\pi_w$ and
$w\mapsto\mathbb E_{\nu^{\pi_w}}[T]$ the Legendre-dual coordinates and the compatible-features
Fisher matrix as metric. In particular, the finite-dimensionality is inherited from $T$, not from
$|\mathcal S||\mathcal A|$, and the mirror/NPG equivalence of \S\ref{sec:md-npg} holds verbatim in
the $w$-coordinates.
\end{conjecture}
The two ingredients are independently established: finite-dimensional exponential families are
dually flat submanifolds of the infinite-dimensional finite probability measure space~\cite{amari2000methods,ay2017information},
and mirror descent with the entropy potential converges over measure spaces under relative
smoothness~\cite{aubin2022mirror}. So what the conjecture asserts is that the
\emph{resetting flow constraint} is compatible with both, i.e.\ that $\nu^{\pi_w}$ stays in the
common domain where the directional-derivative Bregman geometry and the exponential-family dual
flatness coincide.

\section{Relative smoothness}\label{app:relsmooth}
\begin{proposition}[Relative smoothness]\label{prop:relsmooth-airtight}
$F=\langle r,\nu\rangle+\tau H(\nu_S)$ is $L_F$-smooth relative to $\varphi$ on $\mathcal V_+$,
i.e.\ $-\nabla^2F(\nu)\preceq L_F\,g_\nu$ on $T_\nu\mathcal V_+$ for every
$\nu\in\operatorname{relint}\mathcal V_+$, with
\begin{equation}
    \boxed{\,L_F=\frac{\tau\,\gamma^2\,C}{(1-\gamma)^2}\,},\qquad
    C:=\sup_{\nu\in\mathcal V_+}\Bigl\|\tfrac{\nu_1}{\mu}\Bigr\|_\infty\le\frac1{\mu_{\min}},
    \quad \nu_1:=P_*\nu .
\end{equation}
\end{proposition}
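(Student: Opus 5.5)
The plan is to reduce the claim to a pointwise Cauchy--Schwarz inequality on the state marginal of a tangent vector. We work in the discounted setting $\rho\equiv1-\gamma$, which is the regime where the constant $\gamma^2/(1-\gamma)^2$ arises; here $P_*$ denotes the pushforward $\nu\mapsto\sum_{s,a}P(\cdot\mid s,a)\nu(s,a)$. The linear term $\langle r,\nu\rangle$ has zero Hessian, so only $\tau H(\nu_S)$ contributes. For $u\in T_\nu\mathcal V_+$ we have
\[
-\nabla^2F(\nu)(u,u)=\tau\sum_s \frac{u_S(s)^2}{\nu_S(s)} .
\]
It therefore suffices to show $\sum_s u_S^2/\nu_S\le \frac{\gamma^2 C}{(1-\gamma)^2}\, g_\nu(u,u)$, with $g_\nu(u,u)=\sum_s\nu_S\sum_a \dot\pi_u^2/\pi_\nu$ as in \eqref{eq:metric}.

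\textbf{Step 1: express $u_S$ through the policy variation.} Use the linearized flow from the tangent-space paragraph. With constant reset, the $\mu$-term drops because $\langle\mathbf 1,u\rangle=0$, and $u_S=\gamma P_*u$. Substituting the decomposition \eqref{eq:tangent-decomp} gives
\[
u_S=\gamma P_*^\pi u_S+\gamma P_*(\nu_S\dot\pi_u),
\]
where $P_*^\pi$ is the state pushforward under $\pi_\nu$. Hence
\[
u_S=\sum_{k\ge0}\gamma^k(P_*^\pi)^k w,\qquad w:=\gamma P_*(\nu_S\dot\pi_u).
\]
This is the Neumann series of the resolvent $(I-\gamma P^\pi_*)^{-1}$.

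\textbf{Step 2: pointwise Cauchy--Schwarz.} Bound $|u_S|\le\sum_k\gamma^k(P^\pi_*)^k|w|$. The positive part of this series is dominated by the positive measure $\gamma\sum_k\gamma^k(P^\pi_*)^kP_*(\nu_S\pi_\nu)\le \nu_S/(1-\gamma)\cdot$const. More precisely, since $\nu_S=(1-\gamma)\mu+\gamma P_*\nu$, it is dominated by $\nu_S-(1-\gamma)\mu$ up to normalization. Then apply Cauchy--Schwarz in the form
\[
|w(s')|^2\le \gamma^2\, \big(P_*\nu\big)(s')\sum_{s,a}P(s'\mid s,a)\nu_S(s)\frac{\dot\pi_u(a\mid s)^2}{\pi_\nu(a\mid s)} .
\]
Propagating through the resolvent with Jensen, where the resolvent is a positive operator of mass $1/(1-\gamma)$, yields
\[
\frac{u_S(s)^2}{\nu_S(s)}\lesssim \frac{\gamma^2}{(1-\gamma)^2}\,\frac{\nu_1(s)}{\nu_S(s)}\cdot(\text{resolvent-pushed }g\text{-density})(s).
\]
Summing over $s$ bounds the total by $\frac{\gamma^2}{(1-\gamma)^2}\sup_s\frac{\nu_1(s)}{\text{(denominator)}}\,g_\nu(u,u)$.

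\textbf{Step 3: identify the constant.} The weight ratio that appears must be controlled by $\nu_1/\mu$. I intend to use $\nu_S\ge(1-\gamma)\mu$ in the denominator when the numerator carries $\nu_1$, trading one factor $1/(1-\gamma)$ for $\mu$. I expect this to be the main obstacle: getting exactly $\gamma^2C/(1-\gamma)^2$ rather than an extra $1/(1-\gamma)$ requires careful bookkeeping of which factor of the resolvent is absorbed by $\nu_S$ itself. If the approach fails, I would instead try a weighted Cauchy--Schwarz with weights $\nu_S$ directly on the fixed-point identity $u_S=\gamma P_*u$, using $\nu_S=(1-\gamma)\mu+\gamma\nu_1$. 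The final inequality $C\le 1/\mu_{\min}$ is immediate because $\nu_1$ is a probability vector.
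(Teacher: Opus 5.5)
Your Step 1 matches the paper's starting point. With $\sigma:=\nu_S\dot\pi_u$ and the normalized resolvent $R:=(1-\gamma)(I-\gamma(P^\pi)^{\top})^{-1}$, one has $u_S=\frac{\gamma}{1-\gamma}R\,P_*\sigma$. Your pointwise Cauchy--Schwarz for $w$ becomes, after summing, the data-processing bound $\|P_*\sigma\|^2_{1/\nu_1}\le\|\sigma\|^2_{1/\nu}=g_\nu(u,u)$.

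The gap is in propagating through the resolvent. That step is the whole content of the estimate, and Steps 2--3 do not carry it out. ``Jensen with mass $1/(1-\gamma)$'' does not say which reference measure the Cauchy--Schwarz is taken against. The displayed bound with $\nu_1(s)/\nu_S(s)$ at the same state is also not what the resolvent produces: the resolvent transports $\nu_1$ too, so with weight $\nu_1$ the ratio that appears is $(R\nu_1)(s)/\nu_S(s)$. Your domination remark makes the same conflation: $\gamma\nu_1=\nu_S-(1-\gamma)\mu$ holds for $\nu_1$ itself, not for its resolvent image. Finally, the Step 3 device $\nu_S\ge(1-\gamma)\mu$, even granting the same-state ratio, bounds it only by $C/(1-\gamma)$. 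That gives $L_F=\tau\gamma^2C/(1-\gamma)^3$, exactly the loss you anticipate, so the proposal as written does not reach the stated constant.

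The missing observation is that $R$ is column-stochastic with $R\mu=\nu_S$ \emph{exactly}. Hence Cauchy--Schwarz through $R$ with weight $\mu$ is lossless: $(Rw)(s)^2\le(R\mu)(s)\,\bigl(R(w^2/\mu)\bigr)(s)$ gives $\|Rw\|^2_{1/\nu_S}\le\|w\|^2_{1/\mu}$. The only non-contraction is the reference change $\|w\|^2_{1/\mu}\le C\|w\|^2_{1/\nu_1}$, and the data-processing bound for $P_*$ then finishes with exactly $(\gamma/(1-\gamma))^2C$. Equivalently, with weight $\nu_1$, use positivity of $R$, $R\nu_1\le C\,R\mu=C\nu_S$, instead of $\nu_S\ge(1-\gamma)\mu$. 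This is the paper's proof.

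Your fallback also works, and even sharpens the constant, but it needs two ingredients you do not state. First, apply Cauchy--Schwarz to $u_S=\gamma P_*u$ with the state--action weights $\nu$ (not $\nu_S$). This gives $u_S(s')^2/\nu_S(s')\le\gamma\,\frac{\gamma\nu_1(s')}{\nu_S(s')}\sum_{s,a}P(s'\mid s,a)\,u(s,a)^2/\nu(s,a)$. Second, you need an absorption step, because by \eqref{eq:metric} $\sum_{s,a}u^2/\nu=\|u_S\|^2_{1/\nu_S}+g_\nu(u,u)$ contains the very quantity being bounded. Since $\gamma\nu_1/\nu_S=\gamma\nu_1/((1-\gamma)\mu+\gamma\nu_1)\le\kappa:=\gamma C/(1-\gamma+\gamma C)$, this yields $\|u_S\|^2_{1/\nu_S}\le\frac{\gamma\kappa}{1-\gamma\kappa}\,g_\nu(u,u)=\frac{\gamma^2C}{(1-\gamma)(1+\gamma C)}\,g_\nu(u,u)$. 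That is at most the stated $\gamma^2C/(1-\gamma)^2$, and using only $\kappa\le1$ it is at most $\gamma/(1-\gamma)$ with no concentrability at all.
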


\begin{proof}
By the second-order characterization of relative smoothness~\cite[Prop.~1.1]{lu2018relatively},
it suffices to show $-\nabla^2F(\nu)(u,u)\le L_F\,g_\nu(u,u)$ for every
$\nu\in\mathcal V_+$ and $u\in T_\nu\mathcal V_+=\ker K\cap\ker\mathbf 1^{\!\top}$.

\emph{Two sides of the Hessian bound.} The linear term of $F$ drops and
the Hessian of $H(\nu_S)$ is
$-\nabla^2F(\nu)(u,u)=\tau\|u_S\|_{1/\nu_S}^2$ the Fisher form on state-marginals. For the right side of the inequality, the tangent decomposition
\eqref{eq:tangent-decomp} $u=u_S\pi_\nu+\nu_S\dot \pi_\nu$ substituted into
$g_\nu(u,u)=\sum_{s,a}u^2/\nu-\sum_s u_S^2/\nu_S$ cancels the marginal term and gives
$g_\nu(u,u)=\|\sigma\|_{1/\nu}^2$ with $\sigma:=\nu_S\dot \pi_\nu$, $\sum_a\sigma(s,a)=0$. It remains
to bound $\|u_S\|_{1/\nu_S}^2$ by $\|\sigma\|_{1/\nu}^2$.

\emph{Marginal determined by the conditional (using $u\in T_\nu\mathcal V_+$).}
Differentiating the flow identity $\nu_S=(1-\gamma)\mu+\gamma P_*\nu$ along $u$ gives
$(I-\gamma(P^\pi)^{\!\top})u_S=\gamma P_*\sigma$, whence
\begin{equation}\label{eq:marg-id}
    u_S=\tfrac{\gamma}{1-\gamma}\,R\,(P_*\sigma),\qquad
    R:=(1-\gamma)(I-\gamma(P^\pi)^{\!\top})^{-1},
\end{equation}
where $R$ is column-stochastic with $R\mu=\nu_S$ (the discounted state occupancy), and
$P_*$ the state pushforward. Without flow balance $u_S$ would be free but \eqref{eq:marg-id} ties it to $\sigma$.
Both $P_*$ and $R$ are Markov (row-/column-stochastic)
operators, so each is a $\chi^2$-contraction between the corresponding weighted norms (the
data-processing inequality, cf.~\cite{ay2017information}):
$\|P_*\sigma\|_{1/\nu_1}^2\le\|\sigma\|_{1/\nu}^2$ and
$\|R w\|_{1/\nu_S}^2\le\|w\|_{1/\mu}^2$. The single non-contraction is the reference change
$\|w\|_{1/\mu}^2\le\|\nu_1/\mu\|_\infty\,\|w\|_{1/\nu_1}^2$, contributing the concentrability
$C$ (uniform since each $\nu_1$ is a probability vector, so $C\le1/\mu_{\min}$). Combining,
\[
    \|u_S\|_{1/\nu_S}^2
    =\Bigl(\tfrac{\gamma}{1-\gamma}\Bigr)^2\|R P_*\sigma\|_{1/\nu_S}^2
    \le\Bigl(\tfrac{\gamma}{1-\gamma}\Bigr)^2 C\,\|\sigma\|_{1/\nu}^2,
\]
which with the two sides above is the Hessian domination with $L_F=\tau\gamma^2C/(1-\gamma)^2$.
\end{proof}

\begin{corollary}[Relative strong convexity fails]\label{cor:no-rsc}
No $\eta>0$ satisfies $-\nabla^2F(\nu)\succeq\eta\,g_\nu$ on $T_\nu\mathcal V_+$: at any state $s'$
with two actions sharing a transition ($P(\cdot|s',a_1)=P(\cdot|s',a_2)$), the variation
$\dot \pi_\nu(a_1|s')=+1,\dot \pi_\nu(a_2|s')=-1$ has $P_*\sigma=0$, hence $u_S=0$ and
$-\nabla^2F(\nu)(u,u)=0$ while $g_\nu(u,u)>0$.
\end{corollary}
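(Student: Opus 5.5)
The plan is to exhibit one explicit tangent vector $u\in T_\nu\mathcal V_+$ on which the curvature of $F$ vanishes while the metric does not. Then the inequality $-\nabla^2F(\nu)(u,u)\ge\eta\,g_\nu(u,u)$ reads $0\ge\eta\,g_\nu(u,u)>0$, which is impossible for every $\eta>0$.

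I will reuse the reductions from the proof of Proposition~\ref{prop:relsmooth-airtight}. First, the linear term of $F$ drops, so $-\nabla^2F(\nu)(u,u)=\tau\|u_S\|^2_{1/\nu_S}$. Second, $g_\nu(u,u)=\|\sigma\|^2_{1/\nu}$ with $\sigma=\nu_S\dot\pi_u$. Third, the marginal identity \eqref{eq:marg-id} gives $u_S=\tfrac{\gamma}{1-\gamma}R(P_*\sigma)$.

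\textbf{Step 1: construct the policy variation.} Fix a state $s'$ with two actions $a_1,a_2$ satisfying $P(\cdot|s',a_1)=P(\cdot|s',a_2)$. Set $\dot\pi(a_1|s')=+1$, $\dot\pi(a_2|s')=-1$, and $\dot\pi=0$ elsewhere. This is centered at every state, so it lies in the tangent space of the product action simplex. Since $u\mapsto\dot\pi_u$ is a linear isomorphism by \eqref{eq:tangent-decomp}, there is a unique $u\in T_\nu\mathcal V_+$ with $\dot\pi_u=\dot\pi$. Its conditional part is $\sigma=\nu_S(s')(\delta_{(s',a_1)}-\delta_{(s',a_2)})$. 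This is nonzero because $\nu$ has full support.

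\textbf{Step 2: show the curvature vanishes.} The state pushforward is $P_*\sigma=\nu_S(s')\bigl(P(\cdot|s',a_1)-P(\cdot|s',a_2)\bigr)=0$. By \eqref{eq:marg-id}, $u_S=0$, and hence $-\nabla^2F(\nu)(u,u)=0$.

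\textbf{Step 3: show the metric is positive.} We have $g_\nu(u,u)=\nu_S(s')^2\bigl(1/\nu(s',a_1)+1/\nu(s',a_2)\bigr)>0$. This is also guaranteed by the positive definiteness of the Kakade metric on $T_\nu\mathcal V_+$. Combining Steps 2 and 3 gives the contradiction for every $\eta>0$, and it holds at every $\nu$.

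The main obstacle is whether $P_*$ uses the reset-free kernel $P$ or the resetting kernel $P_\rho$, since \eqref{eq:marg-id} was derived for constant $\rho\equiv1-\gamma$. With constant $\rho$ this makes no difference: the reset part $\rho\mu$ is action-independent, so the differences still cancel. For general $\rho$, I would instead require $P_\rho(\cdot|s',a_1)=P_\rho(\cdot|s',a_2)$ and verify directly from $Ku=0$ that the linearized flow returns $u_S=0$. One further check is needed: $u_S=0$ must be consistent with $Ku=0$. This holds because $I-\gamma(P^\pi)^\top$ is invertible, so $u_S=0$ is the unique solution.
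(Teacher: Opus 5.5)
Your proof is correct and matches the paper's argument: the shared-transition variation gives $P_*\sigma=0$, the marginal identity \eqref{eq:marg-id} then forces $u_S=0$, so $-\nabla^2F(\nu)(u,u)=\tau\|u_S\|^2_{1/\nu_S}=0$, while $g_\nu(u,u)=\nu_S(s')\bigl(1/\pi_\nu(a_1|s')+1/\pi_\nu(a_2|s')\bigr)>0$. Your caveat about $\rho$ is also well taken: the appendix implicitly assumes $\rho\equiv1-\gamma$, so for state-action-dependent $\rho$ the hypothesis should be $P_\rho(\cdot|s',a_1)=P_\rho(\cdot|s',a_2)$, and then $u_S=0$ follows from the unichain property together with $\langle\mathbf 1,u\rangle=0$.
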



\end{document}